\definecolor{Violet}{RGB}{148,0,211} 
\newtheorem{thm}{Theorem}  % Define the theorem environment
\newtheorem{proposition}{Proposition}
\begin{document}
\title{Data Proportion Detection for Optimized Data Management for Large Language Models [Vision]}

\author{Hao Liang $^\dagger$}
\affiliation{%
  \institution{Peking University}
  \city{Beijing}
  \state{China}
}
\email{hao.liang@stu.pku.edu.cn}

\author{Keshi Zhao $^\dagger$}
\affiliation{%
  \institution{Peking University}
  \city{Beijing}
  \state{China}
}
\email{zhaoks@stu.pku.edu.cn}

\author{Yajie Yang}
\affiliation{%
  \institution{Beijing University of Posts and Telecommunications}
  \city{Beijing}
  \state{China}
}
\email{2022211168@bupt.cn}

\author{Bin Cui}
\affiliation{%
  \institution{Peking University}
  \city{Beijing}
  \state{China}
}
\email{bin.cui@pku.edu.cn}

\author{Guosheng Dong}
\affiliation{%
  \institution{Baichuan Inc.}
  \city{Beijing}
  \state{China}
}
\email{dongguosheng@baichuan-inc.com}

\author{Zenan Zhou $^*$}
\affiliation{%
  \institution{Baichuan Inc.}
  \city{Beijing}
  \state{China}
}
\email{zhouzenan@baichuan-inc.com}

\author{Wentao Zhang $^*$}
\affiliation{%
  \institution{Peking University}
  \city{Beijing}
  \state{China}
}
\email{wentao.zhang@pku.edu.cn}
%%
%% The "author" command and its associated commands are used to define the authors and their affiliations.
% \author{Hao Liang, Keshi Zhao, Yajie Yang, Bin Cui, Wentao Zhang}
% % \author{Yangyu Tao$^\ddagger$, Bin Cui$^\dagger$}
% \affiliation{
% $^\ddagger$Peking University
% }
% \affiliation{
% $^\dagger$hao.liang@stu.pku.edu.cn, \{bin.cui, wentao.zhang\}@pku.edu.cn
% }
% \renewcommand{\authors}{Hao Liang, Jiapeng Li, Tianyi Bai, Chong Chen, Conghui He, Bin Cui, Wentao Zhang}

%省空间，还能获得更好效果

%Keyframe for Videollm training

%Keyframe for Videollm inference

%多画图算法的是什么，多efficient,视频500G，...

\begin{abstract}
Large language models (LLMs) have demonstrated exceptional performance across a wide range of tasks and domains, with data preparation playing a critical role in achieving these results. Pre-training data typically combines information from multiple domains. To maximize performance when integrating data from various domains, determining the optimal data proportion is essential. However, state-of-the-art (SOTA) LLMs rarely disclose details about their pre-training data, making it difficult for researchers to identify ideal data proportions. In this paper, we introduce a new topic, \textit{data proportion detection}, which enables the automatic estimation of pre-training data proportions by analyzing the generated outputs of LLMs. We provide rigorous theoretical proofs, practical algorithms, and preliminary experimental results for data proportion detection. Based on these findings, we offer valuable insights into the challenges and future directions for effective data proportion detection and data management. The code is available at \url{https://github.com/yangyajie0625/data_detection}.
\end{abstract}

\maketitle
\begingroup
\renewcommand\thefootnote{}\footnote{\noindent
$\dagger$ The first two authors have equal contributions. \\
$*$ Corresponding Author
%\rule{\columnwidth}{0.2pt} 
}
\addtocounter{footnote}{-1}
\endgroup
%%% do not modify the following VLDB block %%
%%% VLDB block start %%%
% \pagestyle{\vldbpagestyle}
% \begingroup\small\noindent\raggedright\textbf{PVLDB Reference Format:}\\
% \vldbauthors. \vldbtitle. PVLDB, \vldbvolume(\vldbissue): \vldbpages, \vldbyear.\\
% \href{https://doi.org/\vldbdoi}{doi:\vldbdoi}
% \endgroup
% \begingroup
% \renewcommand\thefootnote{}\footnote{\noindent
% This work is licensed under the Creative Commons BY-NC-ND 4.0 International License. Visit \url{https://creativecommons.org/licenses/by-nc-nd/4.0/} to view a copy of this license. For any use beyond those covered by this license, obtain permission by emailing \href{mailto:info@vldb.org}{info@vldb.org}. Copyright is held by the owner/author(s). Publication rights licensed to the VLDB Endowment. \\
% \raggedright Proceedings of the VLDB Endowment, Vol. \vldbvolume, No. \vldbissue\ %
% ISSN 2150-8097. \\
% \href{https://doi.org/\vldbdoi}{doi:\vldbdoi} \\
% }\addtocounter{footnote}{-1}\endgroup
% %%% VLDB block end %%%

% %%% do not modify the following VLDB block %%
% %%% VLDB block start %%%
% \ifdefempty{\vldbavailabilityurl}{}{
% \vspace{.3cm}
% \begingroup\small\noindent\raggedright\textbf{PVLDB Artifact Availability:}\\
% The source code, data, and other artifacts are available at \url{https://github.com/yangyajie0625/data_detection}
% \endgroup
% }
%%% VLDB block end %%%
\section{Introduction}
Large language models (LLMs) have exhibited exceptional performance across a wide range of tasks and domains~\cite{llama,llama3repo,qwen}. However, the rapid development of LLMs has significantly increased the size of pre-training datasets, creating an urgent need for efficient large-scale data management systems~\cite{fernandez2023large, miao2024demystifying, chen2023lingua, prakash2024integrating, wang2023data}. When constructing pre-training datasets, it is essential to integrate data from various sources to achieve optimal performance~\cite{bai2024survey}. Unfortunately, many state-of-the-art models keep their data mixing proportions proprietary, making it difficult for researchers and companies to determine the ideal data mixture. Currently, in preparing data for LLMs, we face the following three challenges:

\textbf{C1. Ineffective Training Results}
Without optimal data proportion, pre-trained models struggle to learn effectively, leading to suboptimal performance in downstream tasks due to insufficient learning of critical features~\cite{xie2024doremi,fan2023doge}.

\textbf{C2. Waste of Computational Resources}
The lack of optimal data proportion forces models to use excessive computation resources, prolonging training time and increasing overall costs~\cite{xie2024doremi,fan2023doge}.

\textbf{C3. Waste of Data and Management Costs}
Suboptimal data proportions reduce the value derived from available data while increasing data management costs, as both data overuse and underuse contribute to unnecessary overhead.

% This lack of transparency results in substantial resource waste as organizations attempt to experiment with and fine-tune data proportions.
To address these challenges, we introduce a new concept called \textit{data proportion detection}, which focuses on identifying the proportion of pre-training data utilized by a model without prior knowledge of the original pre-training data. In this paper, we provide rigorous theoretical justification for this concept and propose a practical algorithm to estimate the pre-training data proportion by leveraging the generated data proportion. Our findings also offer valuable insights for future research directions on the data proportion detection problem.

This paper presents three key takeaway messages:
\begin{itemize}
    \item \textbf{New Perspective} We propose a new perspective for detecting the data proportions in closed-source pre-trained models, which enables effective management of pre-training data proportions and thus reduces the cost associated with adjusting data proportions during pre-training.
    \item \textbf{New Topic} We propose a data proportion detection algorithm grounded in a rigorous theory of data distribution. Using this algorithm, we conducted a preliminary data proportion detection experiment, which establishes a baseline for data proportion detection and serves as a foundation for future research.
    \item \textbf{New Challenges in Data Management} This paper outlines three key challenges in data management: Fast and Large-Scale LLM Inference Systems, Robust Data Cleaning and Classification Systems, the Next Generation of Data Mixing Laws, and robust of data preparing system.
\end{itemize}

The paper is organized as follows: We give preliminary introduction to LLMs and data preparation for LLMs in section \ref{sec: Preliminary}. We then introduce the data proportion detection problem and gave rigorous proof and proposed a practical algorithm for the data proportion detection problem in section \ref{sec: Problem Formulation}. After that, we gave preliminary experiment data proportion detection results in section \ref{sec: Preliminary Experiments}. With the results, we analyze the challenges and future work need for data proportion detection in section \ref{sec: Challenges}. Finally we summarize the results of this vision paper in section \ref{sec: Conclusion}.

\section{Preliminary}\label{sec: Preliminary}
\subsection{Pre-training Stage in LLMs}
The pre-training stage is a critical phase in the development of large language models (LLMs), where the model learns fundamental linguistic patterns from large-scale corpora. During this stage, the model is exposed to vast amounts of text data from various domains, enabling it to acquire a broad understanding of language. The primary objective is for the model to capture syntactic and semantic relationships between words and phrases, thereby developing a foundational knowledge that can be applied to a range of downstream tasks. This stage typically involves unsupervised learning, where the model attempts to predict tokens in a sequence without task-specific guidance, allowing it to generalize effectively across different contexts.

In the pre-training of large language models (LLMs), the goal is to predict the next token in a sequence by minimizing the negative log-likelihood of the predicted token given its context. The loss function for a sequence \( y_0, y_1, \dots, y_n \) is:

\[
\mathcal{L}^{\mathrm{PT}}(y) = -\sum_{i=1}^n \log P\left(y_i \mid y_0, \dots, y_{i-1}; \boldsymbol{\theta} \right)
\]

This function measures how well the model predicts each token given the preceding tokens, and by minimizing this loss, the model learns to generate contextually appropriate sequences. Attention mechanisms, particularly Transformer architectures, are employed to capture long-range dependencies in the data.

% The pre-training process typically involves large-scale datasets that cover a wide range of domains, ensuring that the model can generalize across diverse contexts. During this stage, the model builds a foundational understanding of language structure, word relationships, and semantic meaning, which is crucial for its downstream task performance.

\textbf{Scaling Laws and Data Quality}:  
Scaling Laws describe the relationship between the size of the model, the amount of data, and performance~\cite{kaplan2020scaling, hoffmann2022training}. As the number of parameters in the model increases, or as more data is introduced, the model’s performance improves, but with diminishing returns at a certain point. This creates a challenge where simply increasing model size or dataset size does not guarantee proportionate gains in performance. However, high-quality data can break Scaling Laws by boosting model efficiency beyond what is expected from quantity alone~\cite{sorscher2022beyond}.

One key factor is \textit{data quality}—carefully curated, diverse, and representative datasets lead to more efficient learning. High-quality data allows models to generalize better, even with fewer parameters or less training data. Optimal \textit{data proportioning} across domains also plays a critical role in this process. For example, selecting the right mix of domain-specific and general data ensures that the model is not only learning from the most relevant examples but also retains a broad understanding of language use cases. Hence, focusing on data quality and the optimal ratio of data sources can lead to breakthroughs in model performance that would not be achievable by scaling data size alone~\cite{xie2024doremi,fan2023doge}.

% Key steps in the pre-training process include:
% \begin{itemize}
%     \item \textbf{Tokenization}: The input text is tokenized into smaller units, such as subwords or words, which are then encoded as numerical representations.
%     \item \textbf{Masked Language Modeling (MLM) or Causal Language Modeling (CLM)}: Depending on the objective, the model either predicts randomly masked tokens within a sequence (MLM) or predicts the next token in a left-to-right manner (CLM).
%     \item \textbf{Optimization}: The model parameters \( \boldsymbol{\theta} \) are optimized using gradient-based methods, such as Adam, to minimize the loss function over the entire training dataset.
%     \item \textbf{Training Dynamics}: The training progresses over many epochs with increasing complexity, starting with simple patterns and gradually learning more intricate language structures and relationships as more data is processed.
% \end{itemize}

By the end of the pre-training stage, the model has learned general-purpose linguistic knowledge, which can be fine-tuned for specific tasks in later stages, such as question answering, summarization, or translation.

\subsection{Data Domain Proportioning in LLMs}
A pivotal factor in large language model (LLM) pre-training is the distribution of data across various domains, such as news, science, and social media. Achieving the right balance between these domains is crucial for model generalization across a wide range of tasks. For instance, prioritizing technical domains enhances performance on specialized tasks, while a balanced blend fosters general robustness. Misaligned domain proportioning risks overfitting to specific domains, ultimately hindering the model's ability to perform well on diverse tasks~\cite{bai2024survey, wang2023data}. 

State-of-the-art (SOTA) models like Qwen~\cite{qwen} and LLaMA~\cite{llama} leverage extensive computational resources to achieve optimal data domain balance. To mathematically model the relationship between pre-training data proportions and model performance,~\cite{ye2024data} introduced a \textbf{Data Mixing Law}, expressed as:
\[
\underset{y \sim D_i}{\mathbb{E}} [\operatorname{Loss}(y)] = L_{D_i}(\alpha_1, \cdots, \alpha_m) \approx c_i + k_i \exp \left(\sum_{j=1}^M t_{ij} \alpha_j \right).
\]
This formula describes how the expected loss over a domain \( D_i \) is a function of the domain proportions \( \alpha_1, \alpha_2, \dots, \alpha_m \). The coefficients \( c_i, k_i, \) and \( t_{ij} \) represent constants specific to the domain, capturing the interaction between different domain proportions. This model helps guide decisions on how to mix and balance data from various domains to optimize LLM performance.

In conclusion, proper domain balancing during pre-training equips LLMs with both general and specialized knowledge, ensuring adaptability to a wide array of tasks.

\subsection{Data Preparation for LLMs}
The advent of large language models has brought about a substantial increase in the volume of training data.~\cite{llama, openai2023gpt} In this scenario, the quality and quantity of data become paramount. LLMs, trained on vast amounts of data, can capture subtle nuances and complex patterns in language, excelling in various natural language processing tasks. However, the increase in data volume also brings new challenges, particularly in data management, cleaning, and annotation.~\cite{bai2024survey} In this section, we mainly discuss data management systems for data preparation, data quality, and data selection.

\paragraph{Data Preparation Systems} Researchers in our community have made significant contributions to developing data preparation systems. They have explored how pretrained language models can be leveraged for database configuration~\cite{trummer2022db} and for enhancing SQL queries~\cite{trummer2022codexdb}. Recently, there has been growing interest in preparing data for generative large language models. For instance, ~\cite{fernandez2023large} discusses how LLMs are poised to disrupt data management. While data juicer~\cite{chen2024data} utilize multiple operators for managing large amount of data for LLMs and Multimodal LLMs. 

\paragraph{Data Quality}: High-quality data can significantly enhance the performance of models~\cite{llama3repo}. As the volume of data increases, ensuring high data quality becomes more challenging because it requires more resources for data cleaning, selection and annotation.~\cite{bai2024survey} Poor quality data can lead to models learning incorrect patterns and making inaccurate predictions. Data quality includes multiple aspects including optimal data mixing proportion.

\paragraph{Data Cleaning}: 
LLMs-based methods were commonly used in data cleaning for higher quality data~\cite{bai2024survey}. For instance, \citet{du2023mods} leverages DeBERTa~\cite{he2020deberta} for scoring, retaining high-quality data, and combining it with the k-center greedy algorithm to select diverse data. \citet{chen2023alpagasus} score the accuracy of data using ChatGPT to pick out high-quality data. \citet{xu2023rethinking} use GPT-4 to rewrite data to increase their complexity and then streamline it by reducing its variety and improving its quality. \citet{liu2023makes} train two models using ChatGPT's labeled data to score the quality and complexity of the data. \citet{lu2023instag} rely on ChatGPT to tag each instance, defining its complexity and diversity based on these tags. \citet{parkar2024selectllm} first cluster the data, and then use GPT-4 to select high-quality data for each cluster.

In summary, the rapid expansion of training data required for large language models has intensified the importance of efficient data preparation systems, data quality, and selection processes. While numerous methodologies and systems, such as Data Juicer and LLM-based data cleaning approaches, have made significant strides in managing large-scale data, challenges still remain. Ensuring high-quality data in the face of increasing volumes demands innovative tools and techniques that streamline data filtering, annotation, and selection. Continued advancements in both data management algorithms and system infrastructure are essential to fully unlock the potential of LLMs in various applications.
% \subsection{Data Management Systems for LLMs}
% Researchers in our community have long been contributing to many of the questions we raise in this paper. They have considered how pretrained language models [20, 47] can be used to configure databases [65], to facilitate writing SQL queries [64], to understand how to represent tabular data in formats more amenable to use with the above models [13, 82] and to address problems in data management [45, 78]. And there is similarly a large and growing body of work on optimizations to training and inference time [50, 66, 80, 81].Even more recently, there are a few works that consider generative large language models [52] (the target of this paper, see Section 2.1) for data management problems.

% \subsection{Data Mixing for LLMs}

\section{Problem Formulation}\label{sec: Problem Formulation}
In this section, we first introduce the data proportion detection problem in section \ref{sec: Data_Detection}. Then we proved we can use generated data proportion to estimate the pre-training data proportion in section \ref{sec: Theory_Data_Detection}. After the proof, we conduct a practical algorithm for data proportion detection in section \ref{sec: Preliminary Algorithm}.

\subsection{Data Proportion Detection}\label{sec: Data_Detection}
Given a large language model (LLM) trained on data from multiple domains \( \mathcal{D} = \{D_1, D_2, \dots, D_n\} \), where each domain represents a distinct type of data (e.g., web text, mathematical problems, code), the model has been exposed to varying proportions of data from these domains. Let \( \alpha_i \) denote the proportion of data from domain \( D_i \) used during the training process, where \( \sum_{i=1}^{n} \alpha_i = 1 \) and \( \alpha_i \geq 0 \).

The objective of this research is to investigate the reverse problem: Given the trained LLM \( M \), can we accurately estimate the proportions \( \{\alpha_1, \alpha_2, \dots, \alpha_n\} \) of the training data from each domain? Specifically, we aim to develop methodologies that can, using only the LLM and without access to the original training data, infer the relative contributions of each domain to the model's training. This problem is critical for understanding the domain biases inherent in LLMs and for improving transparency and interpretability in model deployment.

Formally, the problem can be defined as follows:

\begin{itemize}
    \item \textbf{Input}: A trained LLM \( M \) with no access to the original training data.
    \item \textbf{Output}: Estimated domain proportions \( \hat{\alpha}_1, \hat{\alpha}_2, \dots, \hat{\alpha}_n \) that approximate the true training data proportions \( \alpha_1, \alpha_2, \dots, \alpha_n \).
\end{itemize}

The challenge lies in the inherent complexity of the LLM's architecture and the potential overlap between domains, which may blur the distinct signals from individual domains. The solution to this problem requires the development of innovative techniques that can analyze the LLM's behavior, responses, or internal representations to infer the domain distribution \( \{\alpha_1, \alpha_2, \dots, \alpha_n\} \).

\subsection{Theory of Data Proportion Detection}\label{sec: Theory_Data_Detection}
In this section, we first connect generation data proportion with loss of LLMs in proposition \ref{pro:1}, then connect loss of LLMs with pre-training data proportion in proposition \ref{pro:2}.
\begin{proposition}[Probability of a Sentence Belonging to a Domain]\label{pro:1}
Let \( y \) be a generated sentence in domain \( D_i \). The probability that \( y \) belongs to domain \( D_i \) can be expressed as:
\[
P(y \in D_i) := \gamma_i = exp\left(-\underset{y \sim D_i}{\mathbb{E}} [\operatorname{Loss}(y)]\right).
\]
\end{proposition}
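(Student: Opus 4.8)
The plan is to read the proposition off the pre-training loss that has already been written down, reinterpreting it as a statement about the model's generation probabilities. First I would set $\operatorname{Loss}(y)=\mathcal{L}^{\mathrm{PT}}(y)=-\sum_i \log P(y_i\mid y_0,\dots,y_{i-1};\boldsymbol{\theta})$ and observe that, by the autoregressive chain rule, this sum telescopes into $-\log P_{\boldsymbol{\theta}}(y)$, where $P_{\boldsymbol{\theta}}(y)$ is exactly the probability that the model assigns to — equivalently, the probability with which it generates — the sentence $y$. Thus for a single fixed $y$ one has the bare identity $P_{\boldsymbol{\theta}}(y)=\exp(-\operatorname{Loss}(y))$, which is the single-sentence version of the claim and requires no work beyond unfolding the definition.

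The substantive step is to lift this from one sentence to the whole domain $D_i$. Here $\gamma_i=P(y\in D_i)$ should be understood as the probability that a sentence emitted by the model falls in $D_i$, i.e. $\gamma_i=\sum_{y\in D_i}P_{\boldsymbol{\theta}}(y)$. I would approximate this sum by a single representative term using the typical-set / asymptotic equipartition viewpoint (Shannon--McMillan--Breiman): sentences drawn from $D_i$ all carry essentially the same model log-probability, namely its $D_i$-average $\mathbb{E}_{y\sim D_i}[\log P_{\boldsymbol{\theta}}(y)]=-\mathbb{E}_{y\sim D_i}[\operatorname{Loss}(y)]$, so that $\gamma_i\approx\exp\!\big(\mathbb{E}_{y\sim D_i}[\log P_{\boldsymbol{\theta}}(y)]\big)=\exp\!\big(-\mathbb{E}_{y\sim D_i}[\operatorname{Loss}(y)]\big)$. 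A cleaner alternative I would also record is to simply \emph{define} $\gamma_i$ as this geometric mean of the model probabilities over $D_i$; Jensen's inequality (concavity of $\log$) then gives $\gamma_i\le\mathbb{E}_{y\sim D_i}[P_{\boldsymbol{\theta}}(y)]$, showing that $\gamma_i$ is a well-behaved, length-robust lower proxy for the true domain generation mass, which is all that is needed downstream.

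The main obstacle is precisely this passage from the per-sentence identity to the per-domain statement: $\exp(-\mathbb{E}_{y\sim D_i}[\operatorname{Loss}(y)])$ is a geometric mean of sentence probabilities and is not literally equal to $\sum_{y\in D_i}P_{\boldsymbol{\theta}}(y)$ without additional structure, since one must also account for the number of typical sentences and for the mismatch between averaging under $D_i$ and summing model mass over $D_i$. Making it fully rigorous would require either an AEP argument that concentrates the $D_i$ log-likelihood around its mean together with a count of the typical set, or the explicit stipulation that $\gamma_i$ is the geometric-mean quantity above. I would adopt the latter as the working definition, cite the information-theoretic concentration as its justification, and then pass $\gamma_i=\exp(-\mathbb{E}_{y\sim D_i}[\operatorname{Loss}(y)])$ directly into Proposition~\ref{pro:2}.
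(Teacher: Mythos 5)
Your proposal follows the same mechanical route as the paper for the first half: unfold $\operatorname{Loss}(y)$ via the autoregressive chain rule to obtain the per-sentence identity $P_{\boldsymbol{\theta}}(y)=\exp(-\operatorname{Loss}(y))$ (the paper does this token by token, using $P(y_1)=P(\langle \text{bos}\rangle)=1$). The divergence comes at the lift from one sentence to the domain, and there you are actually more careful than the paper. The paper writes $P(y\in D_i)=\mathbb{E}_{y\sim D_i}[P(y)]=\mathbb{E}_{y\sim D_i}[\exp(-\operatorname{Loss}(y))]$ and then asserts ``by definition'' that this equals $\exp\left(-\mathbb{E}_{y\sim D_i}[\operatorname{Loss}(y)]\right)$, i.e.\ it silently swaps the expectation with the exponential. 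By Jensen's inequality that swap only yields $\mathbb{E}[\exp(-L)]\ge\exp(-\mathbb{E}[L])$, with equality precisely when $\operatorname{Loss}$ is almost surely constant on $D_i$; this is exactly the arithmetic-versus-geometric-mean gap you flag as the ``main obstacle,'' and the paper does not close it. Your two proposed patches---an AEP/concentration argument, or stipulating that $\gamma_i$ \emph{is} the geometric mean---are both legitimate ways to make the stated equality true, whereas the paper's final step, as written, is an unjustified identity.

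One further discrepancy worth recording: you formalize $\gamma_i$ as the model's total generation mass $\sum_{y\in D_i}P_{\boldsymbol{\theta}}(y)$, which is what Algorithm~\ref{algorithm:1} actually estimates (the fraction of generated samples classified into $d_i$), while the paper formalizes it as $\mathbb{E}_{y\sim D_i}[P(y)]$, an arithmetic average of model probabilities taken over the domain's own distribution. These are different objects (the latter is not even normalized across domains), and neither coincides with $\exp(-\mathbb{E}_{y\sim D_i}[\operatorname{Loss}(y)])$ without the typical-set counting you mention. If you adopt the definitional route, you should note explicitly that it leaves a mismatch with the empirical estimator $\gamma_i\approx\frac{1}{M}\sum_{j}\mathbb{I}(y_j\in d_i)$ used downstream, which estimates generation mass rather than a geometric mean; closing that mismatch is the real mathematical content that both your sketch and the paper's proof leave open.
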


\begin{proof}
Let \( y \) be a generated sentence in domain \( D_i \). We want to calculate the probability that \( y \) belongs to domain \( D_i \). The probability can be expressed as:

\[
P(y \in D_i) = \underset{y \sim D_i}{\mathbb{E}} [P(y)].
\]

Expanding this expectation, we have:

\[
\begin{aligned}
P(y \in D_i) &= \underset{y \sim D_i}{\mathbb{E}} [P(y)] \\
             &= \underset{y \sim D_i}{\mathbb{E}} [P(y_1, \cdots, y_n)] \\
             &= \underset{y \sim D_i}{\mathbb{E}} \left[P(y_1) \cdots P(y_n \mid y_1, \cdots, y_{n-1})\right].
\end{aligned}
\]

Given that \( P(y_1) = P(\langle \text{bos} \rangle) = 1 \), this simplifies to:

\[
\begin{aligned}
P(y \in D_i) &= \underset{y \sim D_i}{\mathbb{E}} \left[P(y_2 \mid y_1) \cdots P(y_n \mid y_1, \cdots, y_{n-1})\right] \\
             &= \underset{y \sim D_i}{\mathbb{E}} \left[exp\left({\log P(y_2 \mid y_1) + \cdots + \log P(y_n \mid y_1, \cdots, y_{n-1})}\right)\right].
\end{aligned}
\]

Since \( \log P(y_j \mid y_1, \cdots, y_{j-1}) \) is the negative loss for the token \( y_j \), the expression can be rewritten as:

\[
P(y \in D_i) = \underset{y \sim D_i}{\mathbb{E}} \left[exp\left({-\operatorname{Loss}(y)}\right)\right].
\]

By definition, this expectation is equivalent to the exponential Expected Loss of the sentence \( y \) in domain \( D_i \):

\[
P(y \in D_i) = exp\left(-\underset{y \sim D_i}{\mathbb{E}} [\operatorname{Loss}(y)]\right).
\]

\end{proof}

This expected loss \( \underset{y \sim D_i}{\mathbb{E}} [\operatorname{Loss}(y)] \) can be approximated using Monte Carlo sampling as:
\[
\underset{y \sim D_i}{\mathbb{E}} [\operatorname{Loss}(y)] \approx \frac{1}{n} \sum_{j=1}^n \operatorname{Loss}(y_j), \quad y_j \overset{\text{i.i.d.}}{\sim} D_i.
\]
% Thus, the approximated perplexities can be denoted as \( \gamma_1, \ldots, \gamma_M \).

From proportion \ref{pro:1}, we relate the synthetic data proportion with perplexity and Loss of model. By applying the following data mixing law from~\cite{ye2024data}, we can relate the synthetic data proportion with the training data proportion.

\begin{thm}[Data Mixing Law]
For a LLM which is trained by data proportion \( \alpha_1, \alpha_2, \dots, \alpha_n \), the expected loss can be expressed as:
\[
\underset{y \sim D_i}{\mathbb{E}} [\operatorname{Loss}(y)] = L_{D_i}(\alpha_1, \cdots, \alpha_m) \approx c_i + k_i \exp \left(\sum_{j=1}^M t_{ij} \alpha_j \right).
\]
\end{thm}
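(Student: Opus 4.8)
Because the statement is an approximate scaling law rather than an exact identity, the plan is not to derive it from first principles but to assemble it from (i) a reduction to the already-established single-variable scaling behavior of language-model loss and (ii) a functional-form argument for how the $M$ domain proportions combine, followed by empirical validation of the constants. The starting point is the standard empirical fact, used throughout the scaling-laws literature~\cite{kaplan2020scaling,hoffmann2022training}, that when the amount of training data drawn from a fixed source is varied while everything else is held constant, the validation loss on a target domain decreases and saturates toward an irreducible floor in a way well described by a shifted exponential.

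First I would treat the two-domain case. Fixing all proportions except $\alpha_j$ (and absorbing the complementary mass into a single ``rest'' bucket), I would argue that $L_{D_i}$ as a function of $\alpha_j$ alone has the form $c_i + k_i' \exp(t_{ij}\alpha_j)$: the constant $c_i$ is the irreducible loss, the exponential captures the diminishing-returns decay, and the sign of $t_{ij}$ records whether domain $j$ helps or hurts domain $i$. Next I would lift this to the full simplex. The key structural assumption is that the marginal effects of the different source domains are \emph{separable in the exponent}, i.e.\ that each domain contributes a multiplicative factor $\exp(t_{ij}\alpha_j)$ to the reducible part of the loss; this is the natural extension that is simultaneously consistent with every one-dimensional slice obtained by varying a single $\alpha_j$, and it can be motivated heuristically by viewing each domain's contribution to the learning signal as acting roughly independently. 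Collecting the factors gives $k_i \exp(\sum_{j=1}^M t_{ij}\alpha_j)$, and adding the floor $c_i$ yields the stated form. Finally, I would fit $c_i$, $k_i$, and $\{t_{ij}\}$ by nonlinear regression on a modest set of training runs with deliberately varied mixture vectors, then check predictive accuracy on held-out mixtures; this is the step that turns the ``$\approx$'' into something usable.

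The main obstacle is justifying the separable-exponent form against richer alternatives: a priori one could expect pairwise interaction terms such as $\alpha_j\alpha_k$, or a sum of several exponentials, and ruling these out rests on empirical goodness-of-fit rather than on a theorem. A secondary technical point is the gauge freedom introduced by the constraint $\sum_j \alpha_j = 1$: shifting every $t_{ij}$ by a common constant only rescales $k_i$, so the $t_{ij}$ are identifiable only up to an additive offset per target domain, and the fitting procedure must pin this down (e.g.\ by fixing one coordinate or imposing a normalization). I expect the separability argument to be the crux, with everything downstream reducing to curve fitting and sanity checks.
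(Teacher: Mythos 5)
The paper does not prove this statement at all: the ``Data Mixing Law'' is imported verbatim from~\cite{ye2024data} and used as a black-box assumption in the subsequent Proposition on data proportion detection. There is therefore no in-paper proof to compare yours against. That said, your proposal is a faithful reconstruction of how such a law is actually established in the literature: a one-dimensional shifted-exponential fit for each source proportion, lifted to the full simplex by assuming the marginal effects are separable in the exponent, with the constants $c_i$, $k_i$, $t_{ij}$ recovered by nonlinear regression on a small grid of training runs. You are also right to identify the two genuine weak points --- the separability assumption (which rules out interaction terms like $\alpha_j\alpha_k$ only by goodness-of-fit, not by any theorem) and the gauge freedom induced by $\sum_j \alpha_j = 1$ (a uniform shift of the $t_{ij}$ is absorbed into $k_i$, so identifiability requires a normalization). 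The one thing to make explicit is that what you have written is a \emph{modeling argument plus an empirical validation protocol}, not a mathematical proof; the statement carries an ``$\approx$'' precisely because no exact derivation exists, and the paper itself treats it as an empirical premise rather than a derived result. Given that, your proposal is arguably more complete than what the paper offers, but it should be framed as justification for adopting the law, not as a proof of it.
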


\begin{proposition}[data proportion detection]\label{pro:2}
The vector of training data proportions \( \vec{\alpha} \) can be calculated using the following equation:
\[
\vec{\alpha} = T^{-1} \vec{\beta},
\]
where \( T = [t_{ij}] \) is the matrix of coefficients, and the vector \( \vec{\beta} \) is defined as:
\[
\beta_i = \log \left(-\frac{\log \gamma_i + c_i}{k_i}\right),
\]
\end{proposition}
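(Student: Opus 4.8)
The plan is to chain the two results already in hand: Proposition~\ref{pro:1}, which identifies the domain-membership probability $\gamma_i$ with the exponentiated negative expected loss on $D_i$, and the Data Mixing Law, which rewrites that same expected loss as an affine-exponential function of the training proportions $\alpha_1,\dots,\alpha_n$. Composing the two eliminates the unobservable expected loss and leaves a relation between the estimable quantities $\gamma_i$ and the unknown proportions $\vec{\alpha}$.

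Concretely, I would first take logarithms in Proposition~\ref{pro:1} to obtain $\mathbb{E}_{y\sim D_i}[\operatorname{Loss}(y)] = -\log\gamma_i$. Substituting into the Data Mixing Law gives $-\log\gamma_i = c_i + k_i\exp\!\bigl(\sum_{j} t_{ij}\alpha_j\bigr)$. I would then isolate the exponential, $\exp\!\bigl(\sum_{j} t_{ij}\alpha_j\bigr) = -\tfrac{\log\gamma_i + c_i}{k_i}$, and take logarithms a second time to reach $\sum_{j} t_{ij}\alpha_j = \log\!\bigl(-\tfrac{\log\gamma_i + c_i}{k_i}\bigr) =: \beta_i$. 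Stacking these scalar identities over $i$ yields the linear system $T\vec{\alpha} = \vec{\beta}$, and left-multiplication by $T^{-1}$ produces the claimed formula $\vec{\alpha} = T^{-1}\vec{\beta}$.

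The only substantive points are the two inversions. For $T^{-1}$ to exist, $T = [t_{ij}]$ must be square and nonsingular, which forces the number of mixing-law coefficients per domain to match the number of domains and the coefficient matrix to have full rank; I would record this as a standing assumption, since it holds generically for the fitted coefficients of the Data Mixing Law. For the inner logarithm to be defined, I need $-\tfrac{\log\gamma_i + c_i}{k_i} > 0$, i.e. $\log\gamma_i < -c_i$ when $k_i > 0$ — precisely the statement that the expected domain loss strictly exceeds its irreducible floor $c_i$, which is automatic for any non-degenerate model. I expect the main obstacle to be less the derivation itself than honestly scoping the conclusion: both inputs are approximate — Proposition~\ref{pro:1}'s expectation is estimated by Monte Carlo over generated sentences and the Data Mixing Law is itself a curve fit — so the result should be read as $\hat{\vec{\alpha}} = T^{-1}\hat{\vec{\beta}}$ up to the propagated error of these approximations rather than as an exact equality.
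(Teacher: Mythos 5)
Your derivation is correct and follows essentially the same route as the paper's own proof: compose Proposition~\ref{pro:1} with the Data Mixing Law, take logarithms twice to isolate \( \sum_j t_{ij}\alpha_j = \beta_i \), and stack into \( T\vec{\alpha} = \vec{\beta} \). Your added remarks on the invertibility of \( T \) and the positivity condition \( -(\log\gamma_i + c_i)/k_i > 0 \) are conditions the paper leaves implicit, but they do not change the argument.
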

\begin{proof}
We begin by considering the relationship between the expected perplexity and the expected loss. Specifically, we have:
\[
\gamma_i = \underset{y \sim D}{\mathbb{E}} \left(exp\left[-\operatorname{Loss}(y)\right]\right) = exp\left(-\underset{y \sim D}{\mathbb{E}} [\operatorname{Loss}(y)]\right).
\]

The expected loss \( \underset{y \sim D_i}{\mathbb{E}} [\operatorname{Loss}(y)] \) can be approximated using the Data Mixing Law:
\[
\underset{y \sim D_i}{\mathbb{E}} [\operatorname{Loss}(y)] = L_{D_i}(\alpha_1, \ldots, \alpha_m) \approx c_i + k_i \exp \left(\sum_{j=1}^M t_{ij} \alpha_j \right).
\]

Given the generated data proportions \( \gamma_1, \ldots, \gamma_M \) and applying the Data Mixing Law, we have the approximation:
\[
exp\left(-c_i - k_i \exp \left(\sum_{j=1}^M t_{ij} \alpha_j \right)\right) \approx exp\left(-\underset{y \sim D}{\mathbb{E}} [\operatorname{Loss}(y)]\right) \approx \gamma_i.
\]

Assuming the accuracy of the Data Mixing Law and a large sample size, we can simplify this to:
\[
\gamma_i = exp\left({-\left(c_i + k_i \exp \left(\sum_{j=1}^M t_{ij} \alpha_j \right)\right)}\right).
\]

Taking the logarithm on both sides, we obtain:
\[
-c_i - k_i \exp \left(\sum_{j=1}^M t_{ij} \alpha_j \right) = \log \gamma_i.
\]

This can be further simplified to:
\[
\exp \left(\sum_{j=1}^M t_{ij} \alpha_j \right) = \frac{-(\log \gamma_i + c_i)}{k_i}.
\]

Taking the logarithm again on both sides, we derive:
\[
\sum_{j=1}^M t_{ij} \alpha_j = \log \left(-\frac{\log \gamma_i + c_i}{k_i}\right).
\]

This expression can be written in matrix form as:
\[
T \vec{\alpha} = \vec{\beta},
\]
where \( \beta_i = \log \left(-\frac{\log \gamma_i + c_i}{k_i}\right) \) and \( T_{ij} = t_{ij} \).

Finally, the training data proportions \( \vec{\alpha} \) can be calculated by inverting the matrix \( T \) and applying it to \( \vec{\beta} \):
\[
\vec{\alpha} = T^{-1} \vec{\beta}.
\]
This completes the proof.
\end{proof}

Proposition \ref{pro:2} demonstrates the connection between synthetic data proportions and pre-training data proportions using the data mixing law. However, this proof provides only a theoretical guarantee, and the method for calculating and estimating the pre-training data proportions remains unclear. A practical algorithm for this estimation is summarized in Section \ref{sec: Preliminary Algorithm}.

\subsection{A Preliminary Algorithm for Data Proportion Detection}\label{sec: Preliminary Algorithm}
Based on the proof in section \ref{sec: Theory_Data_Detection}, we conduct a practical algorithms for data proportion detection. The data proportion detection algorithm consists of three key steps:
%step1.从<bos>开始生成大量数据，M条
%step2.使用分类模型将M条数据分为D_1,...D_n中，得到估计的\gamma_1,\gamma_2...,\gamma_n
%step3.基于估计的\gamma_i以及Data Mixing Law估计\( \hat{\alpha}_1, \hat{\alpha}_2, \dots, \hat{\alpha}_n \)
\begin{algorithm}[t]
  \caption{data proportion detection Process}\label{algorithm:1}
  \KwIn{Number of samples $M$, Parameters $t_{ij}$, $k_i$, $c_i$ from Data Mixing Law and LLM}
  \KwOut{Estimated mixing coefficients $\hat{\alpha}_1, \hat{\alpha}_2, \dots, \hat{\alpha}_n$}
  
  \tcp{Step 1: Data Generation}
  Generate $M$ samples $\{y_i\}_{i=1}^M$, where $\{y_i\}=LLM(<bos>)$;

  \tcp{Step 2: Classification}
  Classify the $M$ samples into $n$ categories using a classification model, resulting in datasets $d_1, d_2, \dots, d_n$. Estimate the proportion $\gamma_1, \gamma_2, \dots, \gamma_n$ for each category as follows:
  \[
    \gamma_i \approx \frac{1}{M} \sum_{j=1}^M \mathbb{I}(y_j \in d_i), \quad y_j \sim d_i \ \text{i.i.d.}
  \]

  \tcp{Step 3: Pre-training Proportion Estimation}
  Estimate the mixing coefficients $\hat{\alpha}_1, \hat{\alpha}_2, \dots, \hat{\alpha}_n$ based on the estimated perplexities $\gamma_i$ and the Data Mixing Law using the formula:
  \[
  \vec{\hat{\alpha}} = T^{-1} \vec{\beta},
  \]
  where \( \beta_i = \log \left(-\frac{\log \gamma_i + c_i}{k_i}\right) \) and \( T_{ij} = t_{ij} \).

\Return $\hat{\alpha}_1, \hat{\alpha}_2, \dots, \hat{\alpha}_n$
\end{algorithm}
%\vspace{-4mm}

\textbf{Step 1: Data Generation} For the LLM, generate $M$ samples $\{y_i\}_{i=1}^M$ by sampling from $LLM(<bos>)$ with a temperature of 1.0 and without top-k filtering. The value of $M$ is typically large to approximate the target perplexity accurately.

\textbf{Step 2: Classification and Perplexity Approximation} We first train a classification model for classifying the models to domain $D_1, D_2,..., D_n$, and the classified data is denoted as $d_1, d_2, ..., d_n$. For each category we approximate the perplexity follow the following formula:
\[
    \gamma_i \approx \frac{1}{M} \sum_{j=1}^M \mathbb{I}(y_j \in d_i), \quad y_j \sim d_i
\]

\textbf{Step 3: Calculate data proportion detection Proportion} In the final step, we estimate the data proportion of pre-training data following the following formula:
  \[
  \vec{\hat{\alpha}} = T^{-1} \vec{\beta},
  \]
  where \( \beta_i = \log \left(-\frac{\log \gamma_i + c_i}{k_i}\right) \) and \( T_{ij} = t_{ij} \).

The algorithm's key steps are concisely summarized in Algorithm \ref{algorithm:1}. Through these steps, we conduct preliminary experiments in section \ref{sec: Preliminary Experiments}. The experiment reveals several challenges, including fast LLM inference, robust data cleaning and classification
systems and improving the next generation of data mixing laws. Addressing these challenges is crucial for enhancing the performance and reliability of data proportion detection in practical applications. We will discuss these challenges in detail in section \ref{sec: Challenges}.

% \section{Preliminary Experimental Results}
% \subsection{Experiment Setting}

% \subsection{Experiment Results}

\section{Preliminary Experiments}\label{sec: Preliminary Experiments}
In this section, we conduct preliminary experiments and analyze the challenges faced in data proportion detection. Since the data mixing law parameters are not available for LLMs, we use the first two steps in algorithm \ref{algorithm:1} of synthetic data to estimate the data proportion of training data.

\subsection{Experiment Setting}
\paragraph{Base Model} We choose MAP-NEO 7B Base, a state-of-the-art open-source model with a well-documented pre-training data distribution. This model serves as a benchmark for evaluating the effectiveness of data proportion detection algorithms.

\paragraph{Dataset} We generate 100K of data from <bos> using MAP-NEO 7B Base. The 100K data were then classified using the classification model for estimation of the proportion of each category.

\paragraph{Classification Model} We fine-tune Meta-Llama-3-8B-Instruct as our classification model. By leveraging 10000 pre-training data from each category in MAP-NEO for fine-tuning, we achieve a classification accuracy exceeding 90\%.

\paragraph{Setting}
We set the temperature to 1.0 for random generation, without applying top\_k inference. All experiments are conducted on a machine equipped with 8 NVIDIA A100 GPUs, a 120-core CPU, and 960GB of memory.

\begin{figure*}
  \centering
  \begin{minipage}[b]{0.47\textwidth}
    \includegraphics[width=\textwidth]{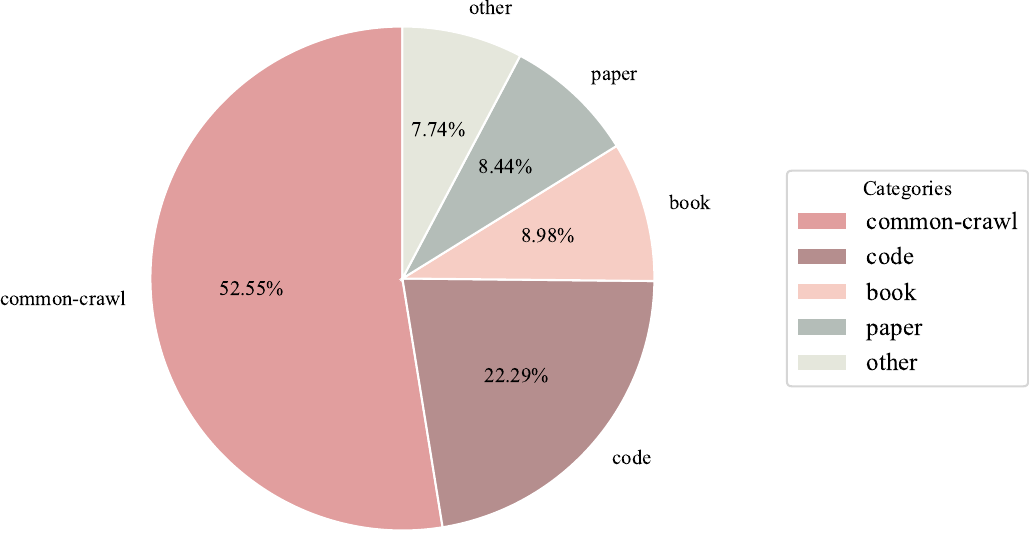}
    \caption{Training data proportion}
    \label{fig: Distribution_ori}
  \end{minipage}
  \hfill
  \begin{minipage}[b]{0.47\textwidth}
    \includegraphics[width=\textwidth]{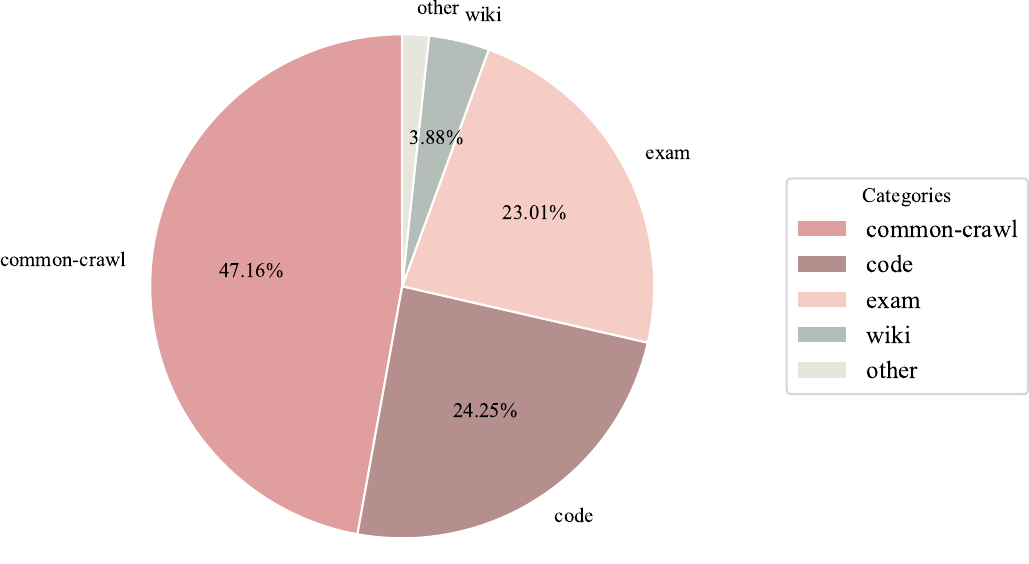}
    \caption{Detect data proportion}
    \label{fig: Distribution_detect}
  \end{minipage}
\end{figure*}

\subsection{Experimental Results}

We generated a synthetic dataset of 100k examples to assess the proportion of synthetic data. A classification model was employed to automatically categorize the data. Since no formal data mixing guidelines for MAP-NEO are available, we rely on the common assumption that an increase in data from a specific domain results in a lower loss for that domain. Consequently, we use the proportion of generated data to estimate the actual training data distribution.

As illustrated in Figure \ref{fig: Distribution_ori}, we present the actual training data distribution. Figure \ref{fig: Distribution_detect} shows the estimated data proportions. The initial data proportion detection process accurately identifies the proportions of common-crawl and code, as they constitute a significant portion of the dataset. However, for other categories, the preliminary detection method fails to accurately estimate the data proportions. We attribute this discrepancy to the lack of a comprehensive data cleaning system and the absence of a robust data classification framework.

\section{Challenges \& Future Directions}\label{sec: Challenges}

Given the potential inaccuracies in data proportion detection, we outline the challenges and future directions in this area, giving insights of future directions in the data proportion detection topic.

\subsection{Fast and Large-Scale LLM Inference Systems}
Following Algorithm \ref{algorithm:1}, we need large amount of generated data from <bos> to estimate the generated data proportion accurately. To support the large amount of generated data, there is a need to develop fast LLM inference frameworks.  Currently, although significant advancements like KV-cache have been made in accelerating LLMs inference~\cite{kwon2023efficient, zhang2024pqcache}, further improvements in both algorithms and hardware support are essential. Thus, we must prioritize the development of faster, large-scale inference-supported frameworks. 

\subsection{Robust Data Cleaning and Classification Systems}
\paragraph{Synthetic Data Cleaning System}
Synthetic data can be highly unstructured, particularly for smaller models. Models without supervised fine-tuning often produce inconsistent outputs, as shown in Figure \ref{fig: Cleaning}. Although some pioneering works address the cleaning of low-quality web text~\cite{wenzek2020ccnet}, these approaches are insufficient for handling generated data. As a result, specialized data cleaning techniques tailored to generated data are necessary. One possible solution is to leverage LLMs to clean and rephrase the generated sentences.

\paragraph{Robust Data Classification System} After data cleaning, it is crucial to ensure accurate model classification. While our fine-tuned system achieves over 90\% accuracy, surpassing 95\% remains a challenge for LLMs, indicating that current classification methods are still inadequate. Additionally, the system lacks robustness, as varying models produce diverse data types, leading to inconsistent classification performance. Therefore, developing a more reliable and resilient data classification system is essential. One possible solution can be preparing large amount of high-quality classification data to fine-tune LLMs and utilize prompt engineering for better performance.

\begin{figure}[ht]
  \includegraphics[width=0.47\textwidth]{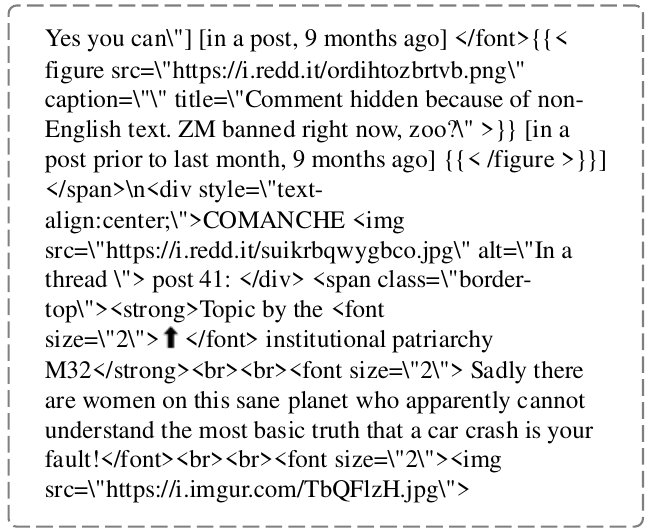}
  \caption{MAP-NEO synthetic data from <bos>. The data can be low quality and propose challenges for classification models. Hence we need strong classification models and data cleaning.}
  \label{fig: Cleaning}
  \vspace{-3mm}
\end{figure}

\subsection{Next Generation of Data Mixing Laws}
\paragraph{Data Mixing Laws for LLMs}
It is generally understood that increasing the amount of training data in a specific domain tends to reduce loss in that domain. However, the precise impact of training data proportions on loss remains unclear. To the best of our knowledge, the data mixing law is the only framework that mathematically describes this relationship.

Although \cite{ye2024data} has proposed scaling laws, they have not conducted experiments specifically for LLMs. More accurate scaling laws tailored to LLMs are still required.

\paragraph{Efficient Estimation of Data Mixing Laws} Estimating data mixing laws across $n$ domains typically requires extensive LLM pre-training, leading to significant computational overhead. To mitigate this inefficiency, we propose a more streamlined approach for estimating data mixing laws. In this paper, we introduce a preliminary method for estimating the data mixing law using identical projection. However, this approach is not sufficiently accurate because the domains are not identical. Therefore, there is a need for a more efficient, effective, and robust data mixing law to guide future work.

\subsection{Robust Data Preparing System}
After detecting the data proportions, we need to prepare an enormous volume of pre-training data, typically exceeding 30 TB of raw tokens~\cite{together2023redpajama}. Common data processing methods, such as those described in \cite{wenzek2020ccnet}, involve filtering, deduplication, language identification, and language model-based filtering. This necessitates the development of a robust data management system capable of handling over 30 TB of data. Although Data Juicer has been a pioneering system for data preparation, it still faces challenges when processing such large-scale datasets. Therefore, both algorithmic and hardware improvements are essential to further enhance data preparation systems.

In summary, the challenges in data proportion detection for LLM pre-training are multifaceted and require significant advancements across several domains. Data Proportion detection requires both advanced model and data management techniques. Addressing these challenges will not only improve the performance of LLMs but also enhance the understanding of optimal data management practices for pre-training.

% Developing fast and large-scale inference systems is essential for efficiently handling the vast amounts of generated data needed for accurate proportion estimation. Moreover, robust data cleaning and classification systems are critical for maintaining high-quality datasets, especially when working with synthetic data. Furthermore, there is a need for next-generation data mixing laws tailored to LLMs, as well as efficient methods for estimating these laws without incurring high computational costs. Finally, building advanced data preparation systems that can handle the ever-growing volume of pre-training data will be key to furthering progress in this field. 
% Addressing these challenges will not only improve the performance of LLMs but also enhance the understanding of optimal data management practices for pre-training.

\section{Conclusion}\label{sec: Conclusion}
In this paper, we proposed a new topic called data proportion detection which can automatically detect the pre-training data proportion from LLMs. We rigorously define this problem and then gave proof connecting generated data with pre-training data proportion. We then proposed practical algorithms for estimate pre-training data proportions. Since data mixing laws for LLMs are still not available, we gave preliminary estimation using the proportion of synthetic data. After that, we gave valuable insights and possible future works for data proportion detection.

\newpage
\bibliographystyle{ACM-Reference-Format}
\bibliography{sample}

\end{document}